\documentclass[conference]{IEEEtran}
\IEEEoverridecommandlockouts

\usepackage{cite}
\usepackage{amsmath,amssymb,amsfonts,amsthm}
\usepackage{algorithm}
\usepackage{algorithmic}
\usepackage{graphicx}
\usepackage{booktabs}
\usepackage{enumitem}
\usepackage{textcomp}
\usepackage{xcolor}

\newtheorem{theorem}{Theorem}

\newcommand{\AP}{\mathsf{AP}}
\newcommand{\Bad}{\mathsf{Bad}}
\newcommand{\Trace}{\mathsf{Tr}}

\begin{document}

\title{GARV: Graph-Aware Refinement and Safety Verification for High-Level Robot Plans}
\author{
\IEEEauthorblockN{
    Stabak Das\IEEEauthorrefmark{1},
    Priyesh Ranjan\IEEEauthorrefmark{1},
    Xiangfang Li,
    Lijun Qian
}
\IEEEauthorblockA{
    CREDIT Center \& Department of Electrical and Computer Engineering \\ Prairie View A\&M University, Texas A\&M University System, 
    Prairie View, TX 77446, USA \\
    Emails: \{sdas1;
    prranjan;
    xili;
    liqian\}@pvamu.edu
}
\thanks{\IEEEauthorrefmark{1}Equal contribution.}
}

\maketitle

\begin{abstract}
High-level robot actions such as \texttt{goto(region)} can hide intermediate traversal, allowing a monitor of the listed actions to accept a plan whose graph-level execution violates a safety rule. We present Graph-Aware Refinement and Verification (GARV), which composes graph-grounded action contracts with a safety monitor to produce a model-relative certificate or counterexample and, when route choice is enforceable, a minimum-cost safe refinement. In a 28-case controlled audit, refinement exposed all 12 targeted hidden hazards and produced the expected verdicts on 16 controls; a separate 14-case SPINE-RoboGuard instantiation yielded two false-safe cases and five action-interface errors. In standalone finite-graph simulation, all 16 designed cases met their expected outcomes, and product verification agreed with exhaustive enumeration on classification, minimum safe hop count, and witness safety in 160 comparisons across 40 graphs. Across 90 graph-contract evaluations on 30 graphs with a hazard deliberately placed on the selected shortest route, GARV found a safe refinement in 72 evaluations and no safe route within the permitted hop bound in 18. For the static forbidden-region rule, synthesis matched a simple hazard-deletion baseline in all 90 evaluations; GARV's guarantees remain conditional on the semantic graph, action contract, safety specification, and enforceability of any selected route.
\end{abstract}

\begin{IEEEkeywords}
Robot safety, semantic graphs, temporal logic, runtime verification, hierarchical planning
\end{IEEEkeywords}

\section{Introduction}
Robots increasingly receive high-level commands such as ``go to the hall'' or ``inspect the sensor.'' A planner may represent these commands as a short sequence of symbolic actions, while navigation and API layers resolve them into graph paths and implicit effects. A safety monitor reasons only about the events it receives. If the monitored trace omits intermediate regions or implicit navigation, its verdict may not apply to the behavior induced by the command.

Consider a robot asked to reach a safe hall. Its surface plan may contain only \texttt{goto(hall)}, although the selected graph route passes through a forbidden construction area $C$. Under $\varphi_C=\mathbf{G}\neg\texttt{goto}(C)$, the surface trace is accepted because it omits entry into $C$. There is a second difficulty even when the selected route is safe: another route permitted by the navigation policy may pass through $C$. Checking one selected expansion cannot certify every execution that the policy permits. Both cases require an explicit account of the graph-level events represented by a high-level action.

We formulate this account as a graph-grounded action contract. Given a semantic graph, a high-level plan, an action contract, and a safety monitor, GARV explores a graph-monitor scheme to check the routes admitted by the contract. It returns a model-relative certificate when all admitted traces satisfy the specification, or a concrete violating one when any one trace does not. When the navigation interface can enforce a chosen route, the same scheme supports minimum-cost safe refinement while preserving the plan's high-level action endpoints. The existence of a safe route alone does not certify execution if the interface chooses an unsafe one.

We use SPINE and RoboGuard~\cite{ravichandran2025spine,ravichandran2026roboguard} as one instantiation: SPINE supplies high-level actions, and RoboGuard supplies scene-grounded safety specifications for monitoring. The GARV algorithm does not require these systems to operate independently; its inputs are a graph, compatible action and event semantics, and a monitor for finite safety violations.

Our contributions are:
\begin{itemize}[leftmargin=*,topsep=2pt,itemsep=1pt]
    \item We define graph-grounded action contracts that expose intermediate navigation and documented implicit effects, including the distinction between one selected route and a set of permitted routes.
    \item We present a scheme-graph algorithm for route-set certification and counterexample generation,  with minimum-cost safe refinement when route choices can be enforced. We state the conditions under which these outputs are correct relative to the graph and contract.
    \item We evaluate the abstraction gap in 28 controlled and 14 SPINE-RoboGuard cases, then evaluate GARV's route-set and synthesis modes in 16 designed cases, 160 comparisons with exhaustive enumeration, and 90 graph-contract scaling evaluations.
\end{itemize}

Along with the results on the finite semantic graphs, we also test the additional monitor-contract behavior by using the route-set and checkpoint-dependent cases. We also inspect the execution in case of varied graph fidelity, the modeled action contract, the safety specification, and enforcement of any synthesized route.

The rest of the manuscript is organized as follows. Section~\ref{sec:related} discusses the relevant literature in the area of hierarchical planning and runtime assurance for foundation-model run agents. Section~\ref{sec:method} describes the problem statement and the proposed solution methodology along with the relevant graph-refinement algorithm. Section~\ref{sec:methodology} describes the experimental setup and defines the metrics used for ascertaining the obtained results. Section~\ref{sec:results} contains the obtained results for the controlled verification and end-to-end pipeline and tests with SPINE and RoboGuard along with discussion on the obtained results. Finally, the paper is concluded in Section~\ref{sec:conclusion}.
\section{Related Work}
\label{sec:related}

We review graph-grounded planning, temporal-logic refinement, and safety verification for language-enabled robots. The common concern is how a high-level plan relates to the actions and events produced during execution.

\subsection{Hierarchical Planning with Semantic Graphs}
Task-and-motion planning connects symbolic task choices to geometric feasibility and motion generation. Hierarchical methods commit to abstract choices and refine them into executable motions as planning proceeds~\cite{kaelbling2011hierarchical}. Language-enabled systems use semantic or 3D scene graphs to ground instructions and reduce the planning search space. SayPlan combines hierarchical scene graphs, classical path planning, and simulator feedback for long-horizon tasks~\cite{rana2023sayplan}; SPINE builds and updates semantic maps while refining missions with incomplete natural-language specifications~\cite{ravichandran2025spine}. Scene-graph planning has also combined graph attributes, LTL task constraints, and hierarchical search~\cite{dai2024optimal}.

VeriGraph uses scene graphs to represent manipulation states and checks proposed actions against graph-based constraints before executing them~\cite{ekpo2026verigraph}. These works establish scene graphs as useful models for planning and action validation. GARV focuses on a particular planning interface: the graph-level event traces that a high-level robot API action can produce, including intermediate navigation that may be absent from the proposed action sequence.

\subsection{Temporal Logic, Abstraction, and Refinement}
Temporal-logic robot planning often combines a motion model with an automaton for the specification and searches their product. Kress-Gazit et al.\ synthesize controllers from LTL task specifications using a discrete workspace abstraction and compositions of low-level controllers~\cite{kressgazit2009reactive}. Fu et al.\ plan under LTL constraints in probabilistic semantic maps~\cite{fu2016optimal}, while scene-graph planning has used LTL automata to guide hierarchical search~\cite{dai2024optimal}. GARV builds on these established product-graph and automata techniques.

Formal abstraction research also examines when high-level strategies preserve temporal properties under refinement. De Giacomo et al. study temporally lifted abstractions relating high- and low-level models~\cite{de2025situation}. GARV applies this concern at the robot API boundary: an action contract specifies the graph traces admitted by an action's route policy, and product exploration checks those traces while tracking progress through the original plan. The certificate is relative to that contract. Our planner--monitor audit evaluates selected routes; separate finite-graph experiments evaluate all-shortest and bounded-hop route sets, as well as safe-route synthesis.

\subsection{Safety Verification for Language-Enabled Robots}
Runtime-assurance architectures protect uncertified components using monitors and fallback behavior. SOTER, for example, switches between a high-performance controller and a verified safety controller~\cite{desai2019soter}. At the language-planning interface, the Safety Chip encodes prohibitions as temporal-logic constraints and filters unsafe actions~\cite{yang2024safetychip}. RoboGuard grounds contextual safety rules in a semantic graph and checks proposed robot API actions~\cite{ravichandran2026roboguard}. VerifyLLM translates instructions into LTL and uses LLM-guided analysis to check high-level task plans before execution~\cite{grigorev2025verifyllm}. Wang et al. connect LTL-based task-plan supervision with motion planning by passing hazard information to the motion planner~\cite{wang2024crosslayer}.

GARV makes the action-to-event relationship an explicit verification input. Given a graph, action contract, and safety monitor, it checks the admitted route set and returns a model-relative certificate or a violating route. When route choices can be enforced, it also searches for a minimum-cost safe refinement that preserves the high-level action sequence.

\section{Problem Formulation and Method}
\label{sec:method}

This section focuses on the formulating the input and the action space for the embodied agents. We also provide model-relative correctness for graph-level traces and instantiation for certifiable plans. We also provide an algorithmic definition for the GARV algorithm showcasing the outputs returned for different input cases.

\begin{figure*}[t]
    \centering
    \includegraphics[width=0.99\textwidth]{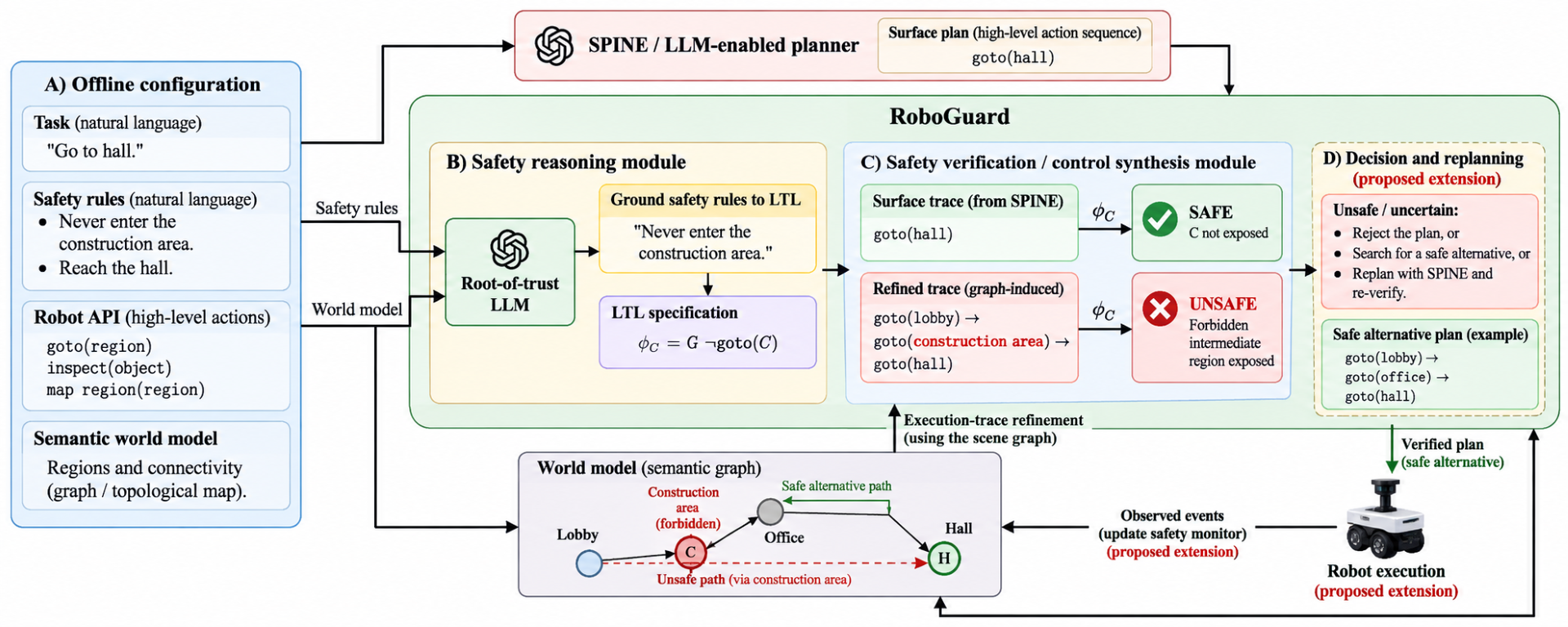}
    \caption{Concrete SPINE-RoboGuard instantiation of graph-aware refinement and verification. The high-level plan, semantic graph, action contract, and safety monitor are the algorithm inputs. Refinement exposes intermediate events before monitor evaluation. The right-hand replanning and robot-execution pathway is conceptual and is not evaluated in the current experiments.}
    \label{fig:architecture}
\end{figure*}

\subsection{Inputs and Action-Refinement Semantics}
Let $G=(V,E,\ell)$ be a finite semantic graph. Each vertex is a robot-relevant region or location, edges encode allowed movement, and $\ell:V\rightarrow 2^{\AP}$ assigns propositions such as \texttt{goto(C)} to regions. Let $x_0\in V$ be the initial location, $\pi=(a_1,\ldots,a_m)$ a high-level action sequence, and $\varphi$ a safety specification over $\AP$.

The key input is an action contract $K_G$. For an action $a_i$ at location $x$, $K_G(a_i,x)$ is a finite labeled transition system describing the graph-level transitions and propositions that the action implementation may produce. For example, a \texttt{goto}(L) contract contains paths from $x$ to $L$ under the specified route policy. An \texttt{inspect}(o) contract includes navigation to an associated region only when that effect is part of the documented API semantics, followed by the inspection event. The contract may represent one deterministic route or a set of permitted routes. Unknown transitions are not treated as safe evidence; they make the case unresolved unless the contract conservatively covers them.

Composing the contracts for the actions in $\pi$ yields the set $\Trace_G(\pi)$ of graph-level event traces. We call the plan \emph{robustly safe relative to $K_G$} if every trace admitted by the contract satisfies the safety property:
\begin{equation}
    \forall\tau\in\Trace_G(\pi),\quad \tau\models\varphi.
    \label{eq:robust_safe}
\end{equation}
This definition distinguishes checking one selected trace from certifying a route family. In our current experiments, $K_G$ is instantiated as a deterministic unweighted shortest-path policy, so $\Trace_G(\pi)$ contains the trace selected by that policy.

\subsection{Monitor Product and Outputs}
Represent the safety property by a monitor $M_\varphi=(Q,q_0,\delta,\Bad)$, where $Q$ is the monitor state set, $q_0$ is the initial state, $\delta$ advances the monitor on an event valuation, and $\Bad\subseteq Q$ contains states reached by a finite violating prefix. For the invariant $\mathbf{G}\neg\texttt{goto}(C)$, an event containing \texttt{goto(C)} enters a bad state.

GARV explores product states $(v,q,i)$, where $v$ is the current graph location, $q$ is the monitor state, and $i$ identifies the high-level action currently being refined. A product transition follows one transition allowed by $K_G(a_i,v)$ and updates the monitor using the event label. If the action reaches its endpoint, the action index advances. Any reachable product state with $q\in\Bad$ provides a graph route and monitor trace witnessing a violation.

If no bad product state is reachable, GARV returns a robust safety certificate relative to the contract. If a violating route exists and the navigation interface can enforce route choices, GARV searches the safe portion of the product graph for a minimum-cost refinement that preserves the original high-level action endpoints. If no such route exists, or if route choices cannot be controlled, it returns a counterexample or an ``unable to certify'' result. Route enforcement is required for a synthesized refinement to be a guarantee about execution.

\begin{algorithm}[t]
\caption{Graph-Aware Refinement and Verification (GARV)}
\label{alg:garv}
\begin{algorithmic}[1]
\REQUIRE Semantic graph $G$, initial location $x_0$, plan $\pi$, action contract $K_G$, safety monitor $M_\varphi$; route-control flag $c$
\ENSURE Safety certificate, violating route, safe refinement, or infeasibility
\STATE Initialize a queue with $(x_0,q_0,1)$; initialize predecessor records
\STATE Initialize $W\leftarrow\emptyset$ (violating witnesses)
\STATE Initialize $F\leftarrow\emptyset$ (completed product states)
\WHILE{the queue is not empty}
    \STATE Remove an unvisited product state $(v,q,i)$
    \IF{$i>m$}
        \STATE Add $(v,q,i)$ to $F$
    \ELSE
        \FORALL{each transition $(v',\sigma,i',c_e)$ allowed by $K_G(a_i,v)$}
            \STATE $q'\leftarrow\delta(q,\sigma)$
            \IF{$q'\in\Bad$}
                \STATE Record the route prefix to $(v',q',i')$ in $W$
            \ELSIF{$(v',q',i')$ is unvisited}
                \STATE Record its predecessor and add it to the queue
            \ENDIF
        \ENDFOR
    \ENDIF
\ENDWHILE
\IF{$W=\emptyset$ and $F\neq\emptyset$}
    \RETURN \textsc{Certified} with the explored product graph
\ELSIF{$W\neq\emptyset$ and $c=\text{true}$}
    \STATE $\rho^*\leftarrow$ shortest path to a completed state in the product graph after removing $\Bad$
    \IF{a safe completed path exists}
        \RETURN \textsc{SafeRefinement} $(\rho^*)$ with its monitor trace
    \ENDIF
\ENDIF
\IF{$W\neq\emptyset$}
    \RETURN \textsc{NotCertified} with a counterexample from $W$
\ELSE
    \RETURN \textsc{Infeasible} under $K_G$
\ENDIF
\end{algorithmic}
\end{algorithm}

The product graph has at most $|V||Q|(m+1)$ states when action progress is represented by the plan index; the reachable subgraph can be explored in time linear in its vertices and transitions. With positive edge costs, the optional safe-refinement search uses Dijkstra's algorithm on the safe product subgraph. This bound excludes the cost of constructing the graph and the action contracts.

\begin{theorem}[Model-relative correctness]
Assume that $K_G$ includes every finite graph-level trace that the action implementation can produce, and that $M_\varphi$ exactly recognizes finite violating prefixes of the safety property. If GARV returns \textsc{Certified}, every execution represented by $K_G$ satisfies $\varphi$. If $K_G$ is exact, a reachable violating trace exists if and only if GARV records a counterexample. If route choices are controllable and a safe refinement is returned, that refinement satisfies $\varphi$ and has minimum cost among feasible safe refinements preserving $\pi$'s action endpoints.
\end{theorem}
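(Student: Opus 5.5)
The plan is to build a trace--path correspondence for the product graph and then derive each of the three claims from it. First I will define the product graph $\mathcal{P}$. Its vertices are triples $(v,q,i)$. Each transition $(v',\sigma,i',c_e)$ allowed by $K_G(a_i,v)$ gives an edge $(v,q,i)\to(v',\delta(q,\sigma),i')$ with cost $c_e$.

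The central lemma states that finite prefixes of traces in $\Trace_G(\pi)$, together with their monitor runs, correspond bijectively to paths in $\mathcal{P}$ from $(x_0,q_0,1)$. Complete traces correspond to paths ending at some $i=m+1$. I will prove this by induction on prefix length, unfolding the sequential composition of the contracts $K_G(a_1,\cdot),\ldots,K_G(a_m,\cdot)$. The index advances exactly when an action reaches its endpoint, and the monitor state after a prefix $\sigma_1\cdots\sigma_k$ is $\delta^*(q_0,\sigma_1\cdots\sigma_k)$. Because $M_\varphi$ exactly recognizes violating prefixes, a trace $\tau$ violates the safety property $\varphi$ if and only if some prefix drives the monitor into $\Bad$. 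By the lemma, that happens if and only if the corresponding path reaches a product state with $q\in\Bad$.

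Next I will show that the loop of Algorithm~\ref{alg:garv} records exactly the first-entry edges into $\Bad$ from the set $R$ of states reachable through non-$\Bad$ states. This is standard BFS reachability: every state in $R$ is eventually dequeued, because each state is enqueued once when first discovered. From each dequeued state every outgoing transition is examined. So $W\neq\emptyset$ if and only if some path reaches a $\Bad$ state. The only other paths are ones whose earliest $\Bad$ state is preceded only by non-$\Bad$ states, and those are caught at their first entry, so the claim holds.

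With these pieces the three claims follow.
\begin{itemize}
\item \textbf{Certified.} \textsc{Certified} is returned only if $W=\emptyset$. Then no path reaches $\Bad$, so no trace represented by $K_G$ has a violating prefix. Since $K_G$ over-approximates the implementation by assumption, every execution satisfies $\varphi$.
\item \textbf{Counterexample.} If $K_G$ is exact, product paths are exactly the implementation traces. The equivalence then gives: a violating trace exists if and only if $W\neq\emptyset$. The predecessor records reconstruct a genuine route prefix.
\item \textbf{Safe refinement.} Restricted to $\mathcal{P}\setminus\Bad$, paths from the initial state to a state with $i=m+1$ are exactly the complete traces that never enter $\Bad$, hence satisfy $\varphi$. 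These are exactly the feasible safe refinements preserving the action endpoints, since index advancement encodes reaching each endpoint. Dijkstra on positive costs returns a minimum-cost such path. Controllability ensures the executed trace is the chosen one.
\end{itemize}

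I expect the main obstacle to be the correspondence lemma. Two points need care there. The first is that visiting each product state only once loses no traces. This holds because all future behavior depends only on $(v,q,i)$, so reachability and first-entry into $\Bad$ are preserved. Note that minimality is not computed by the visit-once BFS itself but by the separate Dijkstra search. The second point concerns finiteness. Safety is a property of finite prefixes, and I will assume that each contract is a finite LTS, so $\mathcal{P}$ is finite and the search terminates. I will also check the edge case $W=\emptyset$, $F=\emptyset$, which is reported as \textsc{Infeasible} and raises no claim to verify.
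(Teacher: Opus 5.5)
Your proposal is correct and follows the paper's proof. Both rest on the same three steps: a correspondence between contract executions and reachable product paths obtained by induction on path length, the identification of reachable $\Bad$ states with violating prefixes, and Dijkstra's optimality on the safe product subgraph. You also make explicit two points the paper leaves implicit: the visit-once exploration reaches every state reachable through non-$\Bad$ states and therefore records every first entry into $\Bad$, and merging paths at a product state loses nothing because successors depend only on $(v,q,i)$.
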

\begin{proof}
    Every contract transition is represented in the product graph, and each product transition applies the same event to the monitor. Induction over path length establishes a correspondence between graph executions and reachable product paths. Thus a reachable bad monitor state corresponds exactly to a violating prefix when the contract is exact. Dijkstra's optimality guarantee applies to the finite safe product graph with nonnegative edge costs. The result is conditional on the graph and contract; it does not cover behaviors they omit.
\end{proof}

Fig.~\ref{fig:architecture} shows one implementation mapping. The algorithm's interface is the tuple $(G,\pi,K_G,M_\varphi)$; SPINE and RoboGuard provide one source of plans and monitors, respectively. The central operation is graph-grounded refinement followed by product monitoring. The diagram's replanning and physical-execution path is a downstream option; the experiments in this paper evaluate graph-trace refinement and verification, not physical execution or route enforcement. We describe the GARV algorithm formally in Algorithm~\ref{alg:garv}.

\section{Experimental Methodology}
\label{sec:methodology}

This section describes the research questions asked and the implementation pipelines used in the experiments. We also describe the metrics used to ascertain the performance of GARV. Additionally, we also describe the action-abstraction cases in detail and provide information about the different test cases and graph alignments used for validating GARV.

\subsection{Research Questions and Protocols}
We evaluate whether graph-aware refinement exposes safety-relevant events hidden by high-level actions while retaining correct outcomes on safe and explicitly unsafe controls. The controlled audit uses fixed JSON graphs, plans, and manually specified safety formulas; it isolates the refinement and monitor interface from LLM specification generation. The end-to-end audit uses SPINE to generate plans from natural-language instructions and RoboGuard to produce graph-grounded safety specifications. Both operate over static graphs.

The controlled suite contains 28 cases: 12 targeted action-abstraction cases, 14 safe controls, and two explicitly unsafe controls. The cases cover hidden-waypoint position, route choice, refinement of \texttt{goto}/\texttt{inspect}/\texttt{map\_region}, and multi-action plans. A deterministic unweighted breadth-first-search oracle applies the route policy and expands each high-level action into its graph-derived event trace. The same safety formula and monitor are used for the surface and refined words.

The end-to-end suite contains 14 cases over four graph scenarios: a construction area forming the unique bridge to a safe laboratory; a matched safe corridor; an unsafe shortest route with a longer safe alternative; and a safe shortest route with a longer unsafe alternative. It covers \texttt{goto}, \texttt{inspect}, and \texttt{map\_region}. Grounding used temperature zero with chain-of-thought enabled, with one trial per case. SPINE's realized route was cross-checked against an independent BFS oracle. The experiment is an offline planner/verifier study, not a physical robot trial.

For this instantiation, we used the unmodified RoboGuard verifier at commit \texttt{e487f83} and SPINE at commit \texttt{e776a8a}. The controlled suite requires no LLM calls; the end-to-end suite uses the recorded SPINE and RoboGuard outputs so that the graph-refinement procedure can be evaluated independently of additional API calls.

The benchmark's route contract is deterministic. Therefore, its results validate the selected-route refinement mode of GARV. The set-valued route certification and minimum-cost safe-route synthesis options are formalized by the algorithm but are not measured in these results.

\subsection{Metrics and Outcome Definitions}
We report (i) hidden hazards exposed by refinement, (ii) hazards already blocked by the surface trace, (iii) safe-control passes, (iv) interface errors, and (v) monitor runtime. A hidden hazard is exposed when the surface trace is accepted but the graph-refined trace reaches a bad monitor state under the same specification. A checker or action-format error is reported separately and excluded from safety denominators. For the controlled suite, ``controls as expected'' combines acceptance of all 14 safe controls and rejection of both explicitly unsafe controls.

We additionally evaluate GARV as a standalone finite-graph simulator, without SPINE, RoboGuard, Spot, or physical execution. Sixteen hand-authored cases test selected versus permitted route sets, infeasible plans, safe detours, lack of route control, a checkpoint-dependent safety rule, and implicit navigation in \texttt{inspect} and \texttt{map\_region} actions. The implemented monitor supports \texttt{always\_avoid} and \texttt{avoid\_until} clauses; the accompanying LTL strings document those clauses and are not parsed by Spot. For an independent software check, we compare product-state search with exhaustive enumeration and a direct event-trace safety scan on 40 seeded graphs (20 each with 9 and 16 regions), under static and temporal rules and hop slack 0 and 2. This gives 160 comparisons. For scaling, we generate ten connected grid-derived graphs at each of 25, 49, and 100 regions and reuse each graph under hop slack 0, 2, and 4 (90 evaluations of 30 distinct graphs). The forbidden region is deliberately selected from the canonical shortest route. We record verification class, minimum-hop safe refinement, explored product states, and Python wall time for verification and synthesis separately. Each instance is timed once; graph generation, language-model calls, and robot execution are outside the timed intervals.

\section{Experimental Results}
\label{sec:results}

We compare the verdict on each high-level surface trace with the verdict on its graph-refined trace under the \emph{same} safety specification. A hidden hazard occurs when the surface trace is accepted but refinement exposes entry into the forbidden region and the monitor rejects the refined trace. A blocked case exposes the forbidden event in the original surface trace, so the monitor already rejects it. Safe controls remain accepted after refinement. Checker errors are counted separately because they provide no safety verdict.

The controlled suite contains 12 targeted hidden-hazard cases, 14 safe controls, and two explicitly unsafe controls. Its targeted cases vary the position of a hidden waypoint, route choice, action type, and the number of actions in the plan. The end-to-end suite contains 14 SPINE-RoboGuard cases across four graph configurations: a unique route through the construction area, a matched safe corridor, an unsafe shortest route with a longer safe alternative, and a safe shortest route with a longer unsafe alternative. These are offline experiments over static graphs; ``refined trace'' refers to the route induced by the specified graph policy, not a measured physical trajectory. Table~\ref{tab:outcomes} separates the outcomes for both protocols.

\begin{table}[t]
    \centering
    \caption{Observed outcomes. Hidden: surface trace accepted but graph-refined trace rejected. Blocked: unsafe event explicit in the surface trace. Safe: safe control accepted. Errors yield no safety verdict.}
    \label{tab:outcomes}
    \scriptsize
    \setlength{\tabcolsep}{2.8pt}
    \begin{tabular}{@{}lrrrrr@{}}
        \toprule
        \textbf{Suite/scenario} & $n$ & \textbf{Hidden} &
        \textbf{Blocked} & \textbf{Safe} & \textbf{Err.}\\
        \midrule
        Controlled suite              & 28 & 12 & 2 & 14 & 0\\
        Unique unsafe bridge          &  5 &  2 & 0 &  1 & 2\\
        Matched safe corridor         &  3 &  0 & 0 &  2 & 1\\
        Unsafe shortest route         &  3 &  0 & 2 &  0 & 1\\
        Safe shortest route           &  3 &  0 & 0 &  2 & 1\\
        \midrule
        End-to-end total              & 14 &  2 & 2 &  5 & 5\\
        \bottomrule
    \end{tabular}
\end{table}

\subsection{Controlled Verification}
All 12 targeted cases exhibited the predicted trace discrepancy: the monitor accepted the surface plan, whereas graph refinement exposed a visit to \texttt{construction\_area\_1} and the monitor rejected the resulting trace. The result held for hidden-waypoint position (3/3), route choice (2/2), action refinement (4/4), and multi-action plans (3/3). By action type, the targeted cases comprised seven \texttt{goto}, three \texttt{inspect}, and two \texttt{map\_region} challenges. These cases deliberately exercise the abstraction boundary; the result establishes that the refinement procedure exposes the specified hidden events in these test graphs.

The controls test whether refinement also preserves straightforward verdicts. All 14 safe controls remained accepted, including cases whose refined routes contained additional but permitted waypoints. Both explicitly unsafe controls were rejected from their surface traces because the forbidden event was already present. There were no checker errors or disagreements with the case-wise expected surface verdicts. Thus, the controlled suite distinguishes three outcomes: hazards hidden by high-level actions, hazards already visible to the monitor, and safe graph traversals.

RoboGuard's monitor check took 0.54 ms on average for surface traces and 0.69 ms for refined traces. These figures measure checking only; they exclude graph construction, route expansion, LLM grounding, and any search for a safe alternative route.

\subsection{SPINE-RoboGuard Instantiation}
SPINE produced a task-attempting plan in all 14 end-to-end cases, and RoboGuard's contextual grounder succeeded for all four graph configurations. Each grounding produced the intended construction-area invariant,
$\varphi_C=\mathbf{G}\neg\texttt{goto}(C)$, where $C$ is the forbidden region. Consequently, the two observed hidden hazards arise at the action-to-trace interface despite the intended prohibition being available to the monitor.

Table~\ref{tab:e2e_traces} shows representative cases. In N2, SPINE supplies only \texttt{goto}($L$), where $L$ is the benign laboratory. The graph-realized route passes through $C$, so refinement changes the monitored word from $g_L$ to $g_C,g_L$. N4 exhibits the same discrepancy in a plan that also maps the destination. The surface word satisfies $\varphi_C$ in both cases because it contains no $g_C$ event; the refined word violates the \emph{same} formula. In N5 and N7, SPINE exposes $g_C$ in the surface plan and RoboGuard blocks it without additional refinement. The safe-corridor controls C2 and C4 acquire an intermediate safe-hallway event $g_H$ and remain accepted. Together, these cases show why the verdict depends on which route events reach the monitor, rather than merely on whether the graph or the LTL rule contains the hazard.

\begin{table}[t]
    \centering
    \caption{Representative end-to-end traces. $C$, $H$, and $L$ are the construction region, safe hallway, and laboratory. $g_X$ denotes \texttt{goto}($X$); $m_X$ denotes \texttt{map\_region}($X$). S/U reports the surface/refined verdict.}
    \label{tab:e2e_traces}
    \scriptsize
    \setlength{\tabcolsep}{2.5pt}
    \begin{tabular}{@{}cllcl@{}}
        \toprule
        \textbf{Case} & \textbf{Surface} & \textbf{Refined} &
        \textbf{S/R} & \textbf{Outcome}\\
        \midrule
        N2 & $g_L$         & $g_C,g_L$       & S/U & Hidden\\
        N4 & $g_L,m_L$     & $g_C,g_L,m_L$   & S/U & Hidden\\
        N5 & $g_C,g_L$     & $g_C,g_L$       & U/U & Blocked\\
        N7 & $g_C,g_L,m_L$ & $g_C,g_L,m_L$   & U/U & Blocked\\
        C2 & $g_L$         & $g_H,g_L$       & S/S & Safe\\
        C4 & $g_L,m_L$     & $g_H,g_L,m_L$   & S/S & Safe\\
        \bottomrule
    \end{tabular}
\end{table}

Of the 14 end-to-end cases, nine yielded comparable surface and refined safety verdicts: two hidden hazards, two cases blocked at the surface, and five safe controls. The other five were \texttt{inspect} cases. SPINE encoded inspection with a structured object-and-question argument, while the RoboGuard-Spot proposition interface expected a scalar argument. This caused checker errors in three challenge cases and two safe controls. We report them as interface failures and exclude them from safety-outcome denominators; they do not demonstrate either safe or unsafe execution.

These experiments measure deterministic graph-route refinement and verification. The set-valued route certification and minimum-cost safe-refinement modes of GARV remain algorithmic claims to be evaluated separately.

\subsection{Monitor Runtime}
\label{subsec:monitor_runtime}

\begin{figure}[h]
    \centering
    \includegraphics[width=\columnwidth]{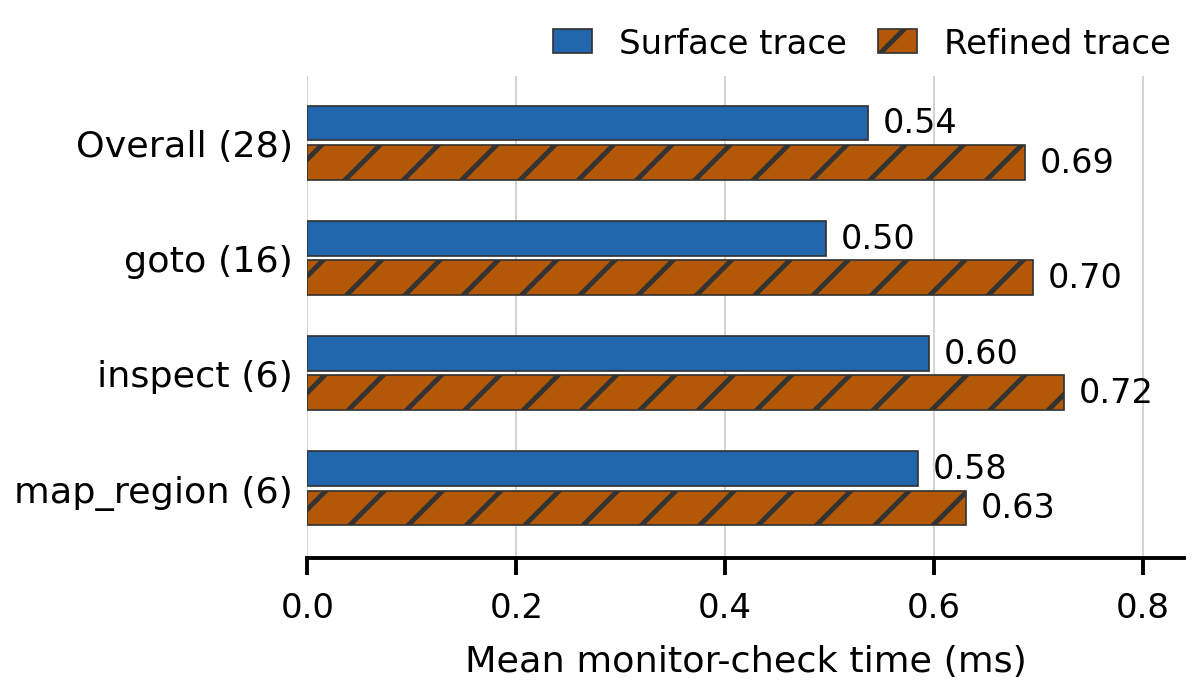}
    \caption{Mean monitor-call time overall and by action family
    (case counts in parentheses). Only
    \texttt{validate\_action\_sequence} was timed; the bars do not
    measure end-to-end latency.}
    \label{fig:monitor_runtime}
\end{figure}

Fig.~\ref{fig:monitor_runtime} shows mean RoboGuard monitor-call time in the 28 paired controlled cases. Surface traces averaged 0.54\,ms and graph-refined traces 0.69\,ms, a descriptive difference of 0.15\,ms. Refinement added events in 26 of 28 cases, increasing mean trace length from 1.21 to 2.86 actions. By action family, the surface/refined means were 0.50/0.70\,ms for \texttt{goto} ($n=16$), 0.60/0.72\,ms for \texttt{inspect} ($n=6$), and 0.58/0.63\,ms for \texttt{map\_region} ($n=6$). Thus, mean monitor time remained below 1\,ms in each group, although this does not measure the cost of generating the refined trace. Each case was timed only once, with the surface call preceding the refined call; we cannot estimate timing variability or attribute the paired difference solely to trace length. The timed interval excludes graph refinement, checker construction, SPINE and LLM calls, and route synthesis. These measurements therefore describe the verifier call, not end-to-end latency or the runtime of the complete GARV algorithm.

\subsection{Route-set verification and safe refinement}
\label{subsec:garv_routes}
All 16 designed cases met their preregistered expected outcomes and exact
enumeration checks. Under their respective verification contracts, three
were \emph{all safe}, three admitted both safe and unsafe routes
(\emph{mixed}), nine were \emph{all unsafe}, and one was infeasible. With
the separately specified control contracts, GARV found a safe route in
11 cases, found no safe route in two, reported no route control in two,
and reported infeasibility in one. The 11 include already safe controls;
they must not be interpreted as 11 repaired failures.

Table~\ref{tab:garv_cases} shows why checking only the selected route and
searching for one safe route answer different questions. In RS01, a safe
selected shortest route coexists with an unsafe equal-length alternative;
universal certification therefore fails. In SR04, allowing one additional
hop changes an all-safe shortest-route contract to a mixed contract, even
though the selected route remains safe. In SR01, every shortest route is
unsafe, but an enforceable three-hop detour reaches the same goal. In SR05,
the three-hop route visits a checkpoint before entering the otherwise
forbidden region; deleting that vertex as a static hazard cannot express
the history-dependent rule. A safe refinement is an execution guarantee
only when the executor can follow the selected route.

\begin{table}[t]
\centering
\caption{Representative designed cases. ``Selected'' checks one canonical
shortest route; ``permitted'' classifies every route in the verification
contract. A synthesized safe route may use a broader, explicitly
controllable contract. Hops count graph edges.}
\label{tab:garv_cases}
\scriptsize
\setlength{\tabcolsep}{3pt}
\begin{tabular}{@{}lccc@{}}
\toprule
\textbf{Case} & \textbf{Selected} & \textbf{Permitted} & \textbf{Safe route}\\
\midrule
RS01: tied shortest paths & Safe & Mixed & 2 hops\\
SR01: safe detour & Unsafe & All unsafe & 3 hops\\
SR04: wider contract & Safe & Mixed & 2 hops\\
SR05: checkpoint first & Unsafe & All unsafe & 3 hops\\
RS05: forbidden bridge & Unsafe & All unsafe & None\\
\bottomrule
\end{tabular}
\end{table}

\subsection{Exhaustive cross-check and scaling}
\label{subsec:garv_scaling}
The product method agreed with direct exhaustive enumeration in all
160 small-graph comparisons on the all-safe/mixed/all-unsafe class,
minimum safe hop count, and safety labels of its returned witnesses.
The enumeration reached at most 258 complete routes in one evaluation.
These comparisons validate the implementation on the sampled finite
models; they are distinct from the earlier SPINE--RoboGuard audit and do
not establish physical execution safety.

Table~\ref{tab:garv_scaling} reports the seeded scaling study. Because the
hazard is on the canonical shortest route in every world, no instance
has an all-safe permitted-route set. GARV instead found 72 safe route
choices across the 90 world/contract evaluations; the remaining 18
had no safe route within their hop budget. Broader contracts generally
made a detour available while retaining unsafe alternatives, so a
\emph{mixed} verification verdict is compatible with successful
route synthesis. At 100 regions, raising slack from 0 to 4 changed the
number of graphs with a safe route from 6/10 to 10/10; the corresponding
median verification time rose from 0.355 to 1.989\,ms, and median synthesis
time from 0.375 to 2.165\,ms. The median number of verification states
explored rose from 42 to 193.5. The ten worlds at each size are reused
at every slack value, so the 90 rows are not independent maps.

\begin{table}[t]
\centering
\caption{Synthetic graph scaling. Each row evaluates ten worlds; each
world is reused for three slack settings. Times are median Python wall
times in ms for product verification (V) and synthesis (S), excluding
graph construction and external systems.}
\label{tab:garv_scaling}
\scriptsize
\setlength{\tabcolsep}{4pt}
\begin{tabular}{@{}rrrrr@{}}
\toprule
\textbf{Regions} & \textbf{Slack} & \textbf{Safe route} & \textbf{V (ms)} & \textbf{S (ms)}\\
\midrule
25 & 0 & 6/10 & 0.093 & 0.093\\
25 & 2 & 7/10 & 0.276 & 0.287\\
25 & 4 & 8/10 & 0.514 & 0.543\\
49 & 0 & 5/10 & 0.123 & 0.137\\
49 & 2 & 10/10 & 0.427 & 0.462\\
49 & 4 & 10/10 & 0.912 & 0.957\\
100 & 0 & 6/10 & 0.355 & 0.375\\
100 & 2 & 10/10 & 1.054 & 1.140\\
100 & 4 & 10/10 & 1.989 & 2.165\\
\bottomrule
\end{tabular}
\end{table}

For the static forbidden-region rule, product synthesis agreed on safe-route
existence and hop count with a simpler baseline that removes the forbidden
vertex and runs shortest-path search in all 90 scaling evaluations. Thus
the static rerouting result alone is not evidence of an advantage over
graph masking. The algorithm's distinct functionality in these tests is
route-set certification and monitoring history-dependent constraints;
the latter is exercised by one designed case and 80 small-graph oracle
comparisons, but has no separate large-graph scaling study here.

\subsection{Discussion and Limitations}

The results support a concrete algorithmic point: a monitor verdict over an abstract action word is meaningful for execution only when the action contract accounts for graph-induced events. GARV makes that contract explicit and can return a route-level witness explaining a violation. The method is not tied to SPINE or RoboGuard; those systems provide the evaluated planner and monitor in our instantiation.

The correctness statement is relative to the graph and action contract. A missing edge, stale graph, undocumented API effect, or route policy mismatch can invalidate a certificate. If the executor's route choice is deterministic, the contract must match that policy. If several routes are possible, robust certification must include every route the executor may take; searching all graph paths that the executor cannot select would be unnecessarily conservative. A synthesized safe path provides an execution guarantee only when the route can be enforced or verified at runtime.

The current experiments evaluate deterministic BFS refinement and monitor checking. The robust route-set and minimum-cost repair modes in Algorithm~\ref{alg:garv} need their own simulation evaluation across alternative routes, equal-cost ties, safe detours, and no-safe-path cases before conclusions about repair performance are drawn. The study also uses static semantic graphs and does not model continuous dynamics, localization uncertainty, perception errors, or physical execution.

\section{Conclusion}
\label{sec:conclusion}
We presented a graph-aware algorithm for refining high-level robot actions into graph-level event traces and checking those traces against a safety monitor. Its product-graph formulation returns a model-relative certificate or counterexample and can support safe route synthesis when route control is available. In the current deterministic-route evaluation, refinement exposed all 12 targeted hidden hazards while all controlled safe and explicitly unsafe controls received their expected verdicts. The next empirical step is to evaluate the route-set and repair modes on broader simulated graph families; physical execution is outside the claims of this study.

\section*{Acknowledgement}

This research work is supported in part by the US Army Research Office under cooperative agreement W911NF-24-2-0133 and the US National Science Foundation award 2205891, 2428761 and the PVAMU PIER initiative.

\bibliographystyle{IEEEtran}
\bibliography{example}

\end{document}